\documentclass{article}
\usepackage{arxiv}

\usepackage{cite}

\usepackage{amsmath, amssymb, amsfonts}
\usepackage{mathtools}
\usepackage{bm}

\usepackage{amsthm}
\theoremstyle{plain}

\newtheorem{lemma}{Lemma}
\theoremstyle{remark}

\usepackage{graphicx}
\usepackage{subcaption}
\usepackage{booktabs}

\usepackage{xcolor}

\usepackage[hidelinks]{hyperref}
\usepackage[nameinlink]{cleveref}
\crefname{figure}{Fig.}{Figs.}
\Crefname{figure}{Fig.}{Figs.}
\crefname{section}{Section}{Sections}
\Crefname{section}{Section}{Sections}
\crefname{table}{Table}{Tables}
\Crefname{table}{Table}{Tables}
\crefname{lemma}{Lemma}{Lemmas}
\Crefname{lemma}{Lemma}{Lemmas}

\usepackage{etoolbox}
\AtBeginEnvironment{thebibliography}{\renewcommand{\url}[1]{}}

\newcommand{\R}[1]{\mathbb{R}^{#1}}
\newcommand{\M}{\mathcal{M}}
\newcommand{\N}{\mathcal{N}}
\renewcommand{\L}{\mathcal{L}}

\newcommand{\G}{\mathbf{G}}
\newcommand{\Hg}{\mathbf{H}}
\newcommand{\SO}[1]{\mathbf{SO}(#1)}
\newcommand{\SE}[1]{\mathbf{SE}(#1)}
\newcommand{\SEk}[2]{\mathbf{SE}_{#1}(#2)}

\newcommand{\g}{\mathfrak{g}}
\newcommand{\ph}{\mathfrak{h}}
\newcommand{\se}[2]{\mathfrak{se}_{#1}({#2})}

\newcommand{\Ad}{\operatorname{Ad}}
\newcommand{\ad}{\operatorname{ad}}
\newcommand{\T}{\mathsf{T}}

\newcommand{\wed}[1]{{#1}^{\wedge}}
\newcommand{\ve}[2]{{#1}^{\vee}_{#2}}
\newcommand{\cro}[1]{{#1}^{\times}}
\renewcommand{\v}[1]{{#1}^{\vee}}

\newcommand{\zero}[2]{\mathbf{0}_{#1\times#2}}

\hypersetup{
    pdftitle={Equivariant Filter Design for Acoustic and Depth Aided Inertial Navigation Systems},
    pdfauthor={Arihant Lunawat, Pieter van Goor, Frank Dellaert, Stefan B. Williams},
    pdfsubject={Equivariant filtering for autonomous underwater vehicle navigation},
    pdfkeywords={Marine robotics, Inertial navigation system, Symmetry, Equivariance, Equivariant filter},
}

\newcommand{\orcid}[1]{\href{https://orcid.org/#1}{\includegraphics[scale=0.06]{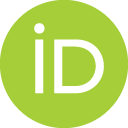}\hspace{1mm}}}

\title{Equivariant Filter Design for Acoustic and Depth Aided Inertial Navigation Systems}
\headertitle{Equivariant Filter Design for Acoustic and Depth Aided Inertial Navigation Systems}

\author{
    \orcid{0009-0001-5908-5403}Arihant Lunawat \\
    The University of Sydney \\
    Sydney, Australia \\
    \texttt{alun0752@uni.sydney.edu.au} \\
    \And
    \orcid{0000-0003-4391-7014}Pieter van Goor \\
    The University of Sydney \\
    Sydney, Australia \\
    \texttt{pieter.vangoor@sydney.edu.au} \\
    \AND
    \orcid{0000-0002-5532-3566}Frank Dellaert \\
    Georgia Tech \\
    Atlanta, USA \\
    \texttt{frank.dellaert@cc.gatech.edu} \\
    \And
    \orcid{0000-0001-9416-5639}Stefan B. Williams \\
    The University of Sydney \\
    Sydney, Australia \\
    \texttt{stefan.williams@sydney.edu.au} \\
}

\newcommand{\publicationversion}{Preprint}
\newcommand{\publicationdetails}{Under review}

\begin{document}

\maketitle

\begin{abstract}
Autonomous Underwater Vehicles (AUVs) navigating without GPS typically fuse inertial measurements with acoustic Doppler Velocity Log (DVL) velocities and pressure-derived depth.
Posing the navigation state on a Lie group improves accuracy and consistency.
However, state-of-the-art filters based on the Invariant Extended Kalman Filter (IEKF) append the Inertial Measurement Unit (IMU) biases as a Euclidean extension, which breaks the group-affine structure required for exact log-linear error dynamics, causing the reported covariance to degrade alongside the estimate.
We apply the Tangent-Group (TG) symmetry, which carries the biases within the geometry of the state space, to derive an Equivariant Filter (EqF) for this system, leaving zero linearization error in the navigation states and second-order error only in the biases.
We develop an equivariant output model for the DVL, whose update incurs only third-order linearization error, together with a direct pressure output.
Monte Carlo simulations benchmark the TG-EqF against a Two-Frame-Group IEKF and a Multiplicative EKF.
The TG-EqF reduces error by 18--25\% against both alternatives in each of attitude, velocity, and position.
The main benefit is in the covariance it estimates: its Average Normalized Estimation Error Squared (ANEES) stays closer to its nominal value of one than that of the others.
Offline analysis on AUV field data corroborates the findings of the simulations, demonstrating reduced position drift.
\end{abstract}

\keywords{Marine robotics \and Inertial navigation system \and Symmetry \and Equivariance \and Equivariant filter}

\section{Introduction}
\label{sec:Introduction}

Autonomous Underwater Vehicles (AUVs) operate in GPS-denied settings using dead reckoning, making the navigation filter's covariance as critical as the state estimate to maintain filter consistency during long dives.
Dead reckoning fuses inertial measurements, corrupted by bias and noise, with acoustic and pressure sensing~\cite{potokarInvariantExtendedKalman2021, talInertialNavigationSystem2017}.
The Extended Kalman Filter (EKF) is the standard for Inertial Navigation Systems (INS) on resource-constrained underwater platforms~\cite{talInertialNavigationSystem2017, huTightlyCoupledSINS2025}.
It linearizes the INS model about the estimated trajectory, making both the estimated mean and covariance dependent on the validity of the linearization.

\begin{figure}[!t]
    \centering
    \begin{subfigure}[b]{0.49\linewidth}
        \centering
        \includegraphics[width=\linewidth]{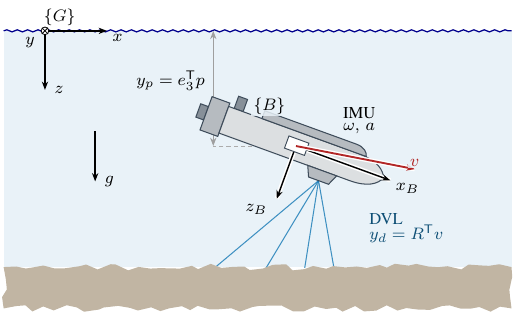}
        \caption{}
        \label{fig:system-overview}
    \end{subfigure}
    \hfill
    \begin{subfigure}[b]{0.49\linewidth}
        \centering
        \includegraphics[width=\linewidth]{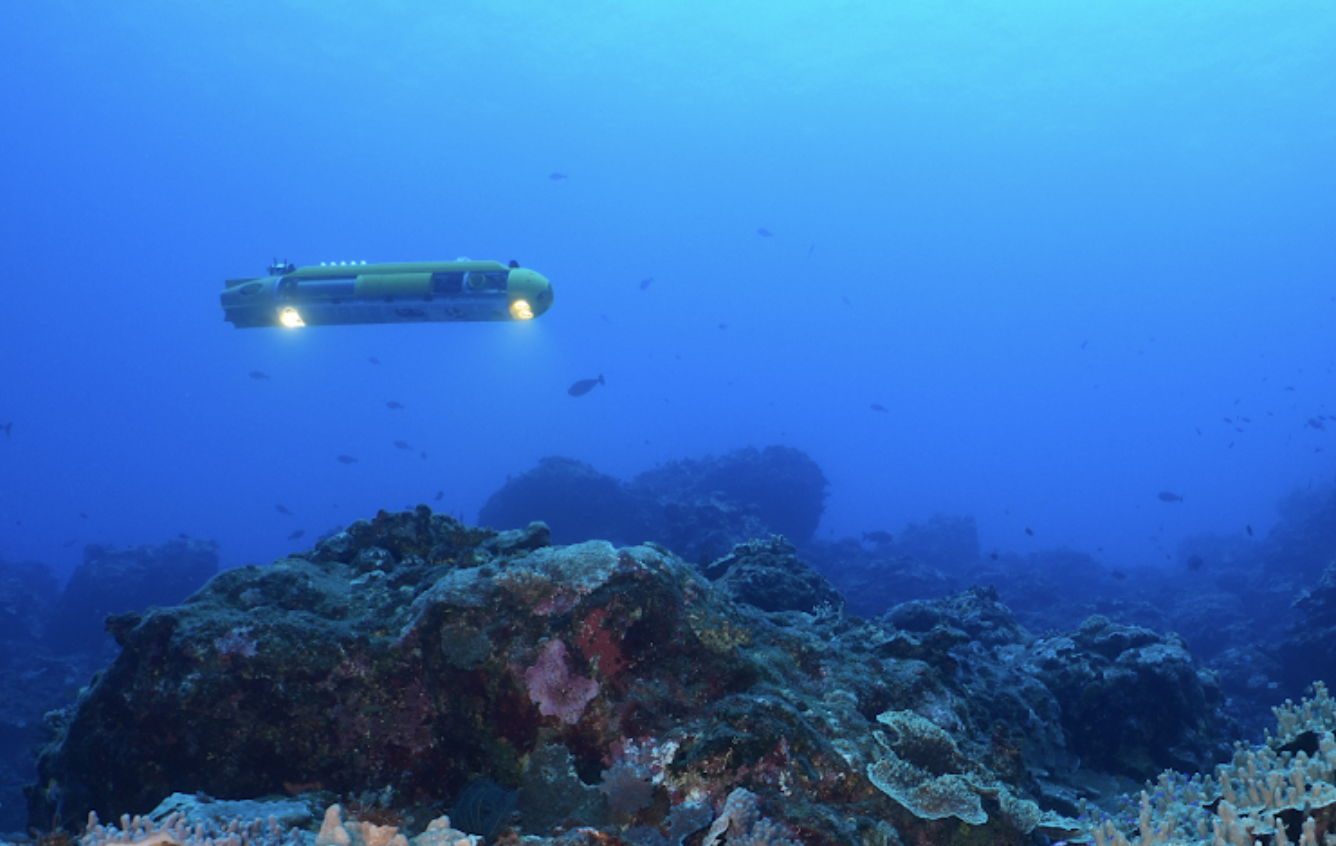}
        \caption{}
        \label{fig:SeekerAUV}
    \end{subfigure}
    \caption{(a) Acoustic and depth-aided inertial navigation. The IMU measures angular velocity $\omega$ and specific force $a$ in the body-fixed frame $\{B\}$, which is not aligned with the gravity-aligned global frame $\{G\}$.
    The DVL reports the vehicle velocity resolved in the body frame, $y_{d} = R^{\T}v$~\eqref{eq:DVLOutput}, and the pressure sensor reports the single gravity-aligned channel $y_{p} = e_{3}^{\T}p$ of the global position~\eqref{eq:DepthOutput}.
    The DVL output admits an equivariant output action; the depth output does not, and is applied directly.
    (b) The AUV platform deployed in the field navigating a reef environment.}
    \label{fig:Overview}
\end{figure}

Posing the INS problem on a Lie group rather than in Euclidean coordinates can improve accuracy and consistency, particularly when the Lie group matches intrinsic symmetries of the system~\cite{barrauInvariantExtendedKalman2017, mahonyObserverDesignNonlinear2022, fornasierEquivariantFilterDesign2022a, barrauGeometryNavigationProblems2023}. 
The Multiplicative EKF (MEKF) exploits the simplest such symmetry, carrying the attitude error on the Lie group $\SO{3}$~\cite{bonin-fontMultisensorAidedInertial2013}, but does not model any coupling between attitude, velocity, and position. 
Barrau and Bonnabel~\cite{barrauInvariantExtendedKalman2017} proposed the Invariant EKF (IEKF) for kinematic systems posed on a matrix Lie group.
They showed that INS, without considering IMU biases, is a \emph{group-affine system} when studied using the Lie group $\SEk{2}{3}$, resulting in a linear evolution of the filter's error coordinates.
Extending the IEKF to a biased IMU requires augmenting the state space with bias states as a Euclidean extension~\cite{potokarInvariantExtendedKalman2021, huTightlyCoupledSINS2025, talInertialNavigationSystem2017}.
This breaks the group-affine property~\cite{fornasierEquivariantSymmetriesInertial2025} of the kinematic system, and the resulting error dynamics are accurate only up to second order~\cite{barrauInvariantExtendedKalman2017}.
The recent Two-Frame-Group (TFG) symmetry~\cite{barrauGeometryNavigationProblems2023} provides an alternative way to include the biases that recovers the group-affine property for straight-line trajectories when the gyroscope bias is known.
However, in general the TFG linearization of velocity and position error dynamics remains strictly second-order accurate~\cite{fornasierEquivariantFilterDesign2022a}.
Wang et al.~\cite{wangInvariantFilteringMethod2024} and Zhu et al.~\cite{zhuTwoframeGroupError2025} applied the TFG symmetry to DVL-aided INS with real-world data, the former also using depth observations.

The Equivariant Filter (EqF)~\cite{vangoorEquivariantFilterEqF2023} relaxes the requirement that the state space be a matrix Lie group with group-affine dynamics and instead provides an observer design for systems that evolve on a homogeneous space, that is, a smooth manifold equipped with a transitive Lie group action.
The EqF has been successfully applied to visual-inertial odometry~\cite{vangoorEqVIOEquivariantFilter2023,fornasierMSCEqFMultiState2024}, GNSS-aided INS~\cite{luIteratedEquivariantFilter2025}, and range-only SLAM~\cite{geEquivariantFilterDesign2025}.
Recently, an EqF based on a new \emph{Tangent-Group} (TG) symmetry was introduced to include IMU biases in a semi-direct product structure~\cite{fornasierEquivariantFilterDesign2022a}.
In contrast to the MEKF, IEKF, and TFG-IEKF, the TG-EqF error dynamics are exactly linear in the navigation states, with second-order terms appearing only in the error dynamics of the bias states~\cite{fornasierEquivariantSymmetriesInertial2025}.
This was shown to significantly improve accuracy and consistency in position-aided INS experiments for aerial systems~\cite{fornasierEquivariantSymmetriesInertial2025}.
The TG-EqF framework is therefore a promising solution for the acoustic- and pressure-aided INS problem encountered in AUV navigation.

In this paper, we build on the TG-EqF~\cite{fornasierEquivariantSymmetriesInertial2025} to design an EqF for an acoustic- and pressure-aided, biased INS for underwater navigation, as illustrated in~\cref{fig:system-overview}.
Our experiments compare the TG symmetry directly against the TFG symmetry used by Wang et al.~\cite{wangInvariantFilteringMethod2024} for the same sensors.
Our main contributions are as follows:
\begin{enumerate}
    \item An EqF for acoustic- and pressure-aided inertial navigation with biased IMU measurements, leveraging the TG symmetry.
    This is the first EqF implementation to fuse biased IMU, DVL, and pressure-sensor data.
    \item An equivariant DVL velocity measurement model, whose update incurs only third-order approximation error.
    \item A simulation study demonstrating the improved consistency and accuracy of the TG-EqF compared to an MEKF and a TFG-IEKF, complemented by an empirical validation on real-world AUV sensor logs.
    \item An open-source C++ implementation of the TG-EqF, TFG-IEKF, and MEKF, based on the GTSAM~\cite{BorglabGtsam2026a} library.\footnote{\url{https://github.com/arihantb2/gtsam/tree/tg-eqf}}
\end{enumerate}
The paper proceeds as follows. 
After reviewing EqF theory (\cref{sec:Preliminaries}), we present the system model and observer (\cref{sec:SystemModel,sec:SymmetryGroup}), filter design (\cref{sec:MeasurementModels,sec:FilterDesign}), and experiments (\cref{sec:Experiments}).
The symmetry, group actions, and equivariant lift are from~\cite{fornasierEquivariantFilterDesign2022a, fornasierEquivariantSymmetriesInertial2025}; our contributions arise from their instantiation for the acoustic suite, the equivariant DVL output model, and an evaluation using both simulated and field data.

\section{Preliminaries}
\label{sec:Preliminaries}

\subsection{Notation}
\label{sec:Notation}

Lie groups are set in bold ($\G$), Lie algebras in Fraktur ($\g$), and smooth manifolds in calligraphic type ($\M$, $\L$, $\N$); their elements are plain italic, uppercase for matrix representations ($T$, $A$, $R$, $W$, $B$, $G$) and lowercase for scalars and coordinate vectors ($v$, $p$, $\omega$, $\eta$, $\zeta$).
The bold symbols are system matrix $\mathbf{A}_{t}^{o}$, output matrix $\mathbf{C}$, covariance $\mathbf{\Sigma}$, process noise matrix $\mathbf{Q}$ and measurement noise matrix $\mathbf{R}$.
A superscript $o$ marks a quantity at the filter origin ($\xi^{o}$, $u^{o}$, $y^{o}$), and $\star$ an equivariant output linearization ($\mathbf{C}^{\star}$).
Dimensions are given once, where an object is introduced, and dropped thereafter.

The cross map $\cro{(\cdot)} : \R{3} \rightarrow \mathfrak{so}(3)$ returns the $3\times3$ skew-symmetric matrix of an $\R{3}$ vector. Transpose is $(\cdot)^{\T}$, $e_{i}$ is the $i$-th standard basis vector of $\R{3}$, and $\mathbb{S}_{+}(n)$ denotes the set of symmetric positive-definite $n \times n$ matrices.
Given a differentiable map $h$, the differentiable is denoted $\mathrm{D} h$.

\subsection{Lie Groups and Semi-Direct Products}
\label{sec:LieGroups}

We follow the conventions for matrix Lie groups laid out in~\cite{fornasierEquivariantSymmetriesInertial2025}.
For a matrix Lie group $\G$ with identity $I_{\G}$, the Lie algebra $\g$ is defined as the tangent space at the identity element and is isomorphic to $\R{n}$, $n = \dim\g$, through the wedge and vee maps.
The wedge map $(\cdot)^{\wedge} : \R{n} \rightarrow \mathfrak{g}$ and its inverse, the vee map $(\cdot)^{\vee} : \mathfrak{g} \to \R{n}$, relate coordinate vectors in $\R{n}$ to the matrix Lie algebra $\g$.
The exponential $\exp : \g \rightarrow \G$ and its local inverse $\log$ relate the Lie group $\G$ and its Lie algebra $\g$.
The Adjoint $\Ad_{X}[\wed{u}] \coloneq X\wed{u}X^{-1}$ and the algebra adjoint $\ad_{\wed{u}}[\wed{v}] \coloneq [\wed{u}, \wed{v}]$ have coordinate matrices $\ve{\Ad}{X}$ and $\ve{\ad}{u}$.

The extended pose group $\SEk{2}{3}$ represents the navigation states $T \coloneq (R, v, p)$ as a $5\times5$ matrix~\cite{fornasierEquivariantSymmetriesInertial2025}. The wedge $\wed{(\cdot)}$ maps vectors in $\R{9}$ to the Lie algebra $\se{2}{3}$,
\begin{align}
\label{eq:SE23}
    \wed{x} = \begin{bmatrix}
        \cro{x_{1}} & x_{2}       & x_{3} \\
        \zero{2}{3} & \zero{2}{1} & \zero{2}{1}
    \end{bmatrix} \in \se{2}{3},
\end{align}
for $x \coloneq (x_{1}, x_{2}, x_{3}) \in \R{9}$, with the inverse vee map $\v{(\cdot)}$.

Given a Lie group $\Hg$ with Lie algebra $\ph$, the semi-direct product (tangent) group $\mathbf{TH} \coloneq \Hg \ltimes \ph$ has elements $X \coloneq (A, c)$, $A \in \Hg$, $c \in \ph$, with product and inverse given by
\begin{subequations}
\label{eq:GroupAxioms}
\begin{align}
    X \cdot Y   &\coloneq \big(AB,\ c + \Ad_{A}[d]\big), & Y &\coloneq (B, d), \\
    X^{-1}      &\coloneq \big(A^{-1},\ -\Ad_{A^{-1}}[c]\big), & I_{\mathbf{TH}} &\coloneq (I_{\mathbf{H}}, 0),
\end{align}
\end{subequations}
so the $\Hg$-factor acts on the $\ph$-factor through its Adjoint. 
For $\gamma = (\eta, \zeta) \in \ph \ltimes \ph$ and $X = (A, c) \in \Hg \ltimes \ph$,
\begin{subequations}
\label{eq:ExpMap}
\begin{align}
    \exp_{\mathbf{TH}}(\wed{\gamma}) &= \Big(\exp_{\Hg}(\wed{\eta}),\ \wed{\big(J_{l}(\eta)\,\zeta\big)}\Big), \\
    \log_{\mathbf{TH}}(X)            &= \Big(\log_{\Hg}(A),\ \wed{\big(J_{l}(\eta_{A})^{-1}\,\v{c}\big)}\Big),
\end{align}
\end{subequations}
where $\eta_{A} \coloneq \v{\log_{\Hg}(A)}$ and $J_{l}(\eta) \coloneq \sum_{k \geq 0} (\ve{\ad}{\eta})^{k}/(k+1)!$ is the left Jacobian~\cite{fornasierEquivariantSymmetriesInertial2025}. 
The group and algebra adjoints inherit the block structure of the semi-direct product,
\begin{align}
\label{eq:Adjoints}
    \ve{\Ad}{X} = \begin{bmatrix}
        \ve{\Ad}{A}                  & \mathbf{0} \\
        \ve{\ad}{\v{c}}\,\ve{\Ad}{A} & \ve{\Ad}{A}
    \end{bmatrix}, &&
    \ve{\ad}{\gamma} = \begin{bmatrix}
        \ve{\ad}{\eta}  & \mathbf{0} \\
        \ve{\ad}{\zeta} & \ve{\ad}{\eta}
    \end{bmatrix}.
\end{align}

\subsection{Group Actions, Homogeneous Spaces and Equivariance}
\label{sec:GroupActions}

A right group action $\phi$ of a Lie group $\G$ on a smooth manifold $\M$ is a smooth map $\phi : \G \times \M \rightarrow \M$ satisfying
\begin{align}
\label{eq:RightActionAxioms}
    \phi(I_{\G}, \xi) = \xi, \qquad \phi\big(Y, \phi(X, \xi)\big) = \phi(XY, \xi),
\end{align}
$\forall X, Y \in \G,\xi \in \M$. 
It induces a family of diffeomorphisms $\phi_{X} : \M \rightarrow \M$ and projections $\phi_{\xi} : \G \rightarrow \M$. 
The action is \emph{transitive} if every projection $\phi_{\xi}$ is surjective, in which case $\M$ is a \emph{homogeneous space} of $\G$; it is \emph{free} if $\phi(X, \xi) = \xi$ implies $X = I_{\G}$.
A system $\dot{\xi} = f_{u}(\xi)$ with input $u$ is \emph{equivariant} under the action $\phi$ and a compatible input action $\psi : \G \times \L \rightarrow \L$ when
\begin{align}
\label{eq:EquivarianceDefinition}
    f_{\psi(X, u)}\big(\phi(X, \xi)\big) = \mathrm{D}\phi_{X} \big[f_{u}(\xi)\big]
\end{align}
$ \forall X \in \G,\xi \in \M,u \in \L$.

\subsection{Equivariant Filter Framework}
\label{sec:EqFBackground}

For kinematic systems on a homogeneous space, the EqF poses the observer state on the symmetry group, linearizing global error dynamics derived from the equivariance of the system, and applying extended Kalman filter design principles~\cite{vangoorEquivariantFilterEqF2023}.
For a system $\dot{\xi} = f_{u}(\xi)$ that is equivariant in the sense of~\eqref{eq:EquivarianceDefinition}, the dynamics are carried onto the group by an \emph{equivariant lift} $\Lambda : \M \times \L \rightarrow \g$ satisfying the lift condition~\cite{vangoorEquivariantFilterEqF2023},
\begin{align}
\label{eq:LiftCondition}
    \mathrm{D}\phi_{\xi}\big[\Lambda(\xi, u)\big] = f_{u}(\xi), \quad \forall\, \xi \in \M,\ u \in \L.
\end{align}
The observer state $\hat{X} \in \G$ reproduces the state trajectory through the action $\phi$, and the EqF carries its estimate about a chosen \emph{origin} $\xi^{o} \in \M$ as $\hat{\xi} = \phi(\hat{X}, \xi^{o}) \in \M$. 
The observer state $\hat{X}$ evolves on the lifted system as the internal model for the observer dynamics~\cite{vangoorEquivariantFilterEqF2023},
\begin{align}
\label{eq:ObserverDynamics}
    \dot{\hat{X}} = \mathrm{D}L_{\hat{X}} \big[\Lambda(\hat{\xi}, u)\big] + \mathrm{D}R_{\hat{X}}[\Delta],
\end{align}
where $\mathrm{D}L_{\hat{X}}$ and $\mathrm{D}R_{\hat{X}}$ are the differentials at the identity of the left and right translations by $\hat{X}$. The lift $\Lambda$ propagates the estimate, while the correction $\Delta \in \g$ is driven by an innovation in the output of the system. 

The EqF tracks the \emph{global equivariant error},
\begin{align}
\label{eq:GlobalError}
    e \coloneq \phi(\hat{X}^{-1}, \xi) \in \M,
\end{align}
which equals the origin $\xi^o$ exactly when $\hat{\xi} = \xi$. 
The EqF is thus designed to drive the error to the origin~\cite{vangoorEquivariantFilterEqF2023}.

Sensor measurements are described by an output function $h: \M \rightarrow \N$ onto an output manifold $\N$.
The output $h$ is called \emph{equivariant} with respect to the action $\phi$ if there exists a group action $\rho : \G \times \N \rightarrow \N$ satisfying
\begin{align}
\label{eq:OutputEquivariance}
    \rho\big(X, h(\xi)\big) = h\big(\phi(X, \xi)\big), \quad \forall\, X \in \G,\ \xi \in \M.
\end{align}
When the output is equivariant, the linearization of the output function can be improved to obtain a third-order approximation~\cite{vangoorEquivariantFilterEqF2023}.

\section{System Model}
\label{sec:SystemModel}

The system model for the AUV is an INS with IMU biases augmented with a virtual velocity channel that renders it equivariant under the Tangent-Group symmetry as described in~\cite{fornasierEquivariantFilterDesign2022a}. Let $\{G\}$ be a reference frame with its third axis aligned with gravity, and $\{B\}$ the body-fixed IMU frame. 
The navigation states $R$, $v$, and $p$ are the attitude, velocity, and position of $\{B\}$ relative to $\{G\}$. 
The IMU inputs, angular velocity $\omega$ and specific force $a$, the virtual velocity input $\nu$ and the corresponding biases $b_{\omega}, b_{a}, b_{\nu}$ are expressed in $\{B\}$. 
Gravity $g$ is expressed in $\{G\}$.
The augmented states and inputs lie on the manifolds $\M$ and $\L$, respectively, with elements,\begin{subequations}
\begin{align}
\label{eq:StateSpaceDefinition}
    \M         &\coloneq \SEk{2}{3} \times \R{9},    & \xi        &\coloneq (T, b) \in \M, \\
    T          &\coloneq (R, v, p) \in \SEk{2}{3},   & b          &\coloneq (b_\omega, b_a, b_\nu) \in \R{9}, \\
\label{eq:InputSpaceDefinition}
    \L         &\coloneq \R{9} \times \R{9},         & u          &\coloneq (\mathrm{w}, \tau) \in \L, \\
    \mathrm{w} &\coloneq (\omega, a, \nu) \in \R{9}, & \tau &\coloneq (\tau_{\omega}, \tau_{a}, \tau_{\nu}) \in \R{9}.
\end{align}
\end{subequations}

The INS dynamics are then given by~\cite{fornasierEquivariantFilterDesign2022a},
\begin{subequations}
\label{eq:INSDynamics}
\begin{align}
    \dot{R} &= R\cro{(\omega - b_{\omega})}, & \dot{b}_{\omega} &= \tau_{\omega}, \\
    \dot{v} &= R(a - b_{a}) + g,             & \dot{b}_{a}      &= \tau_{a}, \\
    \dot{p} &= R(\nu - b_{\nu}) + v,         & \dot{b}_{\nu}    &= \tau_{\nu},
\end{align}
\end{subequations}
where $\dot{b} = \tau$ encodes the bias random walk. 
The virtual inputs $\nu$ and ${\tau}_{\nu}$ are (practically) chosen to be zero, and the virtual bias state $b_{\nu}$ is initialized at zero, so that the original position dynamics $\dot{p} = v$ can be recovered exactly.
The dynamics~\eqref{eq:INSDynamics} are recast in matrix form~\cite{fornasierEquivariantSymmetriesInertial2025},
\begin{subequations}
\label{eq:SystemDynamics}
\begin{align}
    \dot{T} &= T(W - B + N) + (G - N)T \\
    \dot{b} &= \tau,
\end{align}
\end{subequations}
where the navigation states $T = (R, v, p)$ are embedded in $\SEk{2}{3}$ as a $5\times5$ matrix~\eqref{eq:SE23}.
The input $W$, bias $B$, and gravity $G$ matrices are obtained via the wedge map~\eqref{eq:SE23}, $W \coloneq \wed{\mathrm{w}}$, $B \coloneq \wed{b}$, and $G \coloneq \wed{\mathbf{g}}$ with $\mathbf{g} \coloneq (\zero{3}{1},\ g,\ \zero{3}{1}) \in \R{9}$.
The structural matrix $N \in \R{5\times5}$ is zero except for a one in its fourth row and fifth column, encoding the drift component $\dot{p} = v$ of the INS dynamics~\eqref{eq:INSDynamics}.

\section{Tangent-Group Symmetry}
\label{sec:SymmetryGroup}

The Tangent-Group (TG) symmetry~\cite{fornasierEquivariantFilterDesign2022a} embeds the intrinsic structure of the navigation states and IMU biases inside the group.
It is the semi-direct product~(\cref{sec:LieGroups}) of the extended-pose group $\SEk{2}{3}$ and its Lie algebra $\se{2}{3}$,
\begin{align}
\label{eq:GroupDefinition}
    \G \coloneq \mathbf{TG} \coloneq \SEk{2}{3} \ltimes \se{2}{3}.
\end{align}
An element $X \coloneq (A, c) \in \G$ of the group $\G \coloneq \G_{\mathbf{TG}}$ is composed of a navigation part $A \in \SEk{2}{3}$ and a bias part $c \in \se{2}{3}$. 
The navigation part $A$ is further broken down into the attitude, velocity, and position components $(R_{A}, v_{A}, p_{A})$.
The product, inverse, exp, log, and adjoint maps for the TG symmetry are those of~\cref{sec:LieGroups} with $\Hg \coloneq \SEk{2}{3}$.

The system dynamics~\eqref{eq:SystemDynamics} satisfy the equivariance condition~\eqref{eq:EquivarianceDefinition} under the TG symmetry with a state action $\phi \coloneq \phi_{\mathbf{TG}}$ and a compatible input action $\psi \coloneq \psi_{\mathbf{TG}}$.
For $\xi \in \M$ and $u \in \L$,
\begin{subequations}
\label{eq:GroupActions}
\begin{align}
\label{eq:StateAction}
    \phi(X, \xi) &\coloneq \big(TA,\ \ve{\Ad}{A^{-1}}(b - \v{c})\big), \\
\label{eq:InputGroupAction}
    \psi(X, u) &\coloneq \big(\ve{\Ad}{A^{-1}}(\mathrm{w} - \v{c}) + \Omega,\ \ve{\Ad}{A^{-1}}\tau\big),
\end{align}
\end{subequations}
where $\Omega \coloneq (\zero{3}{1},\ \zero{3}{1},\ -R_{A}^{\T}v_{A})$ encodes the position drift component $\dot{p} = v$ of the INS dynamics~\eqref{eq:INSDynamics}.

The equivariant lift $\Lambda: \M \times \L \rightarrow \g,\ \Lambda = (\Lambda_{1}, \Lambda_{2})$ for the TG symmetry is given by~\cite{fornasierEquivariantFilterDesign2022a},
\begin{subequations}
\label{eq:EquivariantLift}
\begin{align}
    \Lambda_1(\xi, u) &\coloneq (W - B + N) + T^{-1}(G - N)T, \\
    \Lambda_2(\xi, u) &\coloneq \ad_{B}\big[\Lambda_1(\xi, u)\big] - \wed{\tau},
\end{align}
\end{subequations}
which lifts the INS dynamics~\eqref{eq:INSDynamics} to the observer dynamics~\eqref{eq:ObserverDynamics} on TG.

\begin{table}[t]
    \centering
    \small
    \caption{Linearization error orders for dynamics~\cite[Table~2]{fornasierEquivariantSymmetriesInertial2025} and measurement updates (\cref{sec:MeasurementModels,sec:Linearization}). The TG-EqF uniquely achieves exact navigation state linearization and a third-order DVL update. ($^{\dagger}$The MEKF carries biases and position in Euclidean coordinates, making $\dot{\varepsilon}_{b} = 0$ and the depth update trivially exact.)}
    \label{tab:LinearizationOrder}
    \begin{tabular}{@{}lcccccc@{}}
    \toprule
    & \multicolumn{4}{c}{Error dynamics} & \multicolumn{2}{c}{Update} \\
    \cmidrule(lr){2-5}\cmidrule(lr){6-7}
    Filter & $\varepsilon_{R}$ & $\varepsilon_{v}$ & $\varepsilon_{p}$ & $\varepsilon_{b}$ & DVL & Depth \\
    \midrule
    MEKF                   & 2nd   & 2nd   & exact & exact$^{\dagger}$ & 2nd & exact$^{\dagger}$ \\
    TFG-IEKF               & exact & 2nd   & 2nd   & 2nd & 2nd & 2nd \\
    \textbf{TG-EqF (ours)} & exact & exact & exact & 2nd & 3rd & 2nd \\
    \bottomrule
\end{tabular}

\end{table}

\section{Measurement Models}
\label{sec:MeasurementModels}

Output models for the acoustic DVL sensor $h_d$ and the pressure-derived depth sensor $h_p$ are presented in this section. 
The DVL output admits an equivariant output action while the depth output does not. 
The output matrices that linearize them are derived in~\cref{sec:Linearization}.

\subsection{Doppler Velocity Log (DVL)}
\label{sec:DVLMeasurementModel}

The DVL admits a third-order accurate approximation because its 3\,DoF body-frame velocity output admits an output action that is equivariant in the sense of~\eqref{eq:OutputEquivariance}.
This is unique among the filters compared in~\cref{tab:LinearizationOrder} and a novel contribution of the present work.
A DVL sensor emits acoustic beams in a fixed configuration down to the sea floor and measures Doppler shift in the reflected echoes to calculate its velocity relative to the bottom.
Formally, the DVL measurement is given by
\begin{align}
\label{eq:DVLOutput}
    y_{d} = h_{d}(\xi) = R^{\T}v \in \N_{d},
\end{align}
where $h_{d} : \M \rightarrow \N_{d}$ is the measurement function with output space $\N_{d} \coloneq \R{3}$.

\begin{lemma}[DVL output action]
\label{lem:DVLOutputAction}
The map $\rho_{d} : \G \times \N_{d} \rightarrow \N_{d}$,
\begin{align}
\label{eq:DVLOutputEquivariance}
    \rho_{d}(X, y_{d}) \coloneq R_{A}^{\T}(y_{d} + v_{A}),
\end{align}
is a right action of $\G$ on $\N_{d}$, and $h_{d}$ is equivariant with respect
to $\phi$ and $\rho_{d}$ in the sense of~\eqref{eq:OutputEquivariance}.
\end{lemma}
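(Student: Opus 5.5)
Two things need checking: that $\rho_{d}$ satisfies the right-action axioms~\eqref{eq:RightActionAxioms}, and that $h_{d}$ satisfies the equivariance condition~\eqref{eq:OutputEquivariance}. Both are direct computations with explicit $\SEk{2}{3}$ matrix products. The bias part $c$ of $X=(A,c)$ never enters $\rho_{d}$, and the bias component of $\phi$ does not affect $h_{d}$. So only the navigation part $A=(R_{A},v_{A},p_{A})$ matters throughout.

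\emph{Equivariance.} By~\eqref{eq:StateAction}, the navigation component of $\phi(X,\xi)$ is $TA$. Multiplying the $5\times5$ embeddings~\eqref{eq:SE23} gives
\[
TA=(RR_{A},\ Rv_{A}+v,\ Rp_{A}+p),
\]
so the velocity of $TA$ is $Rv_{A}+v$. Hence
\[
h_{d}(\phi(X,\xi))=(RR_{A})^{\T}(Rv_{A}+v)=R_{A}^{\T}(v_{A}+R^{\T}v)=\rho_{d}(X,h_{d}(\xi)),
\]
where the middle step uses $R^{\T}R=I$.

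\emph{Action axioms.} The identity $I_{\G}=(I_{\Hg},0)$ has $R_{A}=I$ and $v_{A}=0$, so $\rho_{d}(I_{\G},y)=y$. For compatibility, take $X=(A,c)$ and $Y=(B,d)$ with $B=(R_{B},v_{B},p_{B})$. By~\eqref{eq:GroupAxioms}, the navigation part of $XY$ is $AB=(R_{A}R_{B},\ R_{A}v_{B}+v_{A},\ \cdot)$. Then
\[
\rho_{d}(Y,\rho_{d}(X,y))=R_{B}^{\T}\big(R_{A}^{\T}(y+v_{A})+v_{B}\big)=(R_{A}R_{B})^{\T}\big(y+v_{A}+R_{A}v_{B}\big),
\]
which is exactly $\rho_{d}(XY,y)$. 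Smoothness is clear because $\rho_{d}$ is polynomial in the matrix entries.

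The only real care point is getting the velocity column of the $\SEk{2}{3}$ product right and keeping the multiplication order consistent with a right action. There is no genuine obstacle.
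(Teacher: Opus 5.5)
Your proof is correct and takes the same approach as the paper: the equivariance computation $(RR_{A})^{\T}(Rv_{A}+v)=R_{A}^{\T}(R^{\T}v+v_{A})$ is exactly the paper's argument. Your explicit check of the right-action axioms, using the velocity column $R_{A}v_{B}+v_{A}$ of $AB$, correctly fills in the step the paper only calls straightforward.
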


\begin{proof}
It is straightforward to show that $\rho_{d}$ satisfies the right action axioms~\eqref{eq:RightActionAxioms}.
Equivariance follows from the state action $\phi$~\eqref{eq:StateAction} acting by $T \mapsto TA$:
\begin{align*}
    h_{d}(\phi(X, \xi)) &= (RR_{A})^{\T}(v + Rv_{A}) \\
                        &= R_{A}^{\T}(R^{\T}v + v_{A}) \\
                        &= \rho_{d}(X, h_{d}(\xi)),
\end{align*}
for all $X \in \G$ and $\xi \in \M$.
\end{proof}

This equivariance has two key consequences.
First, the measurement may be transported to the reference output ${y}_{d}^{o} \coloneq h_{d}(\xi^{o})$ via
$\tilde{y}_{d} \coloneq \rho_{d}(\hat{X}^{-1}, y_{d})$, so the innovation
\begin{align}
\label{eq:DVLInnovation}
    \iota_{d} \coloneq \tilde{y}_{d} - {y}_{d}^{o}
\end{align}
is formed at the state-independent point ${y}_{d}^{o} \in \N_{d}$. 
Second, the output can be approximated to third order (rather than second order as in standard filters) using \cite[Lemma~5.3]{vangoorEquivariantFilterEqF2023}, establishing the claim of~\cref{tab:LinearizationOrder}.

\subsection{Pressure Sensor}
\label{sec:DepthMeasurementModel}

The depth output admits no equivariant action, and its update is correspondingly second order~(\cref{tab:LinearizationOrder}). 
The pressure sensor measures depth, the component of position along the gravity-aligned vertical axis of $\{G\}$, through the output function $h_{p} : \M \rightarrow \N_{p}$, with output space $\N_{p} \coloneq \mathbb{R}$,
\begin{align}
\label{eq:DepthOutput}
    y_{p} = h_{p}(\xi) = e_{3}^{\T}p \in \N_{p}.
\end{align}
The obstruction to constructing an equivariant output action is that the state action~\eqref{eq:StateAction} sends $e_{3}^{\T}p \mapsto e_{3}^{\T}(p + Rp_{A})$, and the resulting extra term $e_{3}^{\T}Rp_{A}$ depends on the state $R$ and thus cannot be written in terms of the original output $e_{3}^{\T}p$.
In the absence of equivariance, the output innovation is simply taken as the scalar residual at the current estimate,
\begin{align}
\label{eq:DepthInnovation}
    \iota_{p} \coloneq y_{p} - h_{p}(\hat{\xi}),
\end{align}
and approximated to second order.

\section{Equivariant Filter Design}
\label{sec:FilterDesign}

The EqF of~\cref{sec:EqFBackground} is instantiated for the acoustic- and pressure-aided INS of~\cref{sec:SystemModel} with the TG symmetry of~\cref{sec:SymmetryGroup} and the outputs of~\cref{sec:MeasurementModels}.
The resulting observer state $\hat{X} \in \G$ evolves under the dynamics~\eqref{eq:ObserverDynamics} with lift~\eqref{eq:EquivariantLift} and correction terms derived from the DVL output~\eqref{eq:DVLOutput} and depth output~\eqref{eq:DepthOutput}.
The reference origin $\xi^{o} \in \M$ is fixed at the identity element $(I_{\SEk{2}{3}}, \zero{9}{1})$ of the state manifold throughout.
The filter estimate is given by the state acting on the observer state $\hat{\xi} = \phi(\hat{X}, \xi^{o})$. 
This section presents the EqF process and output matrices.

\subsection{Equivariant Error and Its Dynamics}
\label{sec:ErrorDynamics}

The global equivariant error~\eqref{eq:GlobalError} splits into an extended-pose block and a bias block,
\begin{align}
\label{eq:ErrorSplit}
    e &\coloneq (e_T, e_b) = \big(T\hat{T}^{-1},\ \ve{\Ad}{\hat{T}}(b - \hat{b})\big).
\end{align}
The error dynamics are derived by differentiating~\eqref{eq:ErrorSplit} along the INS dynamics~\eqref{eq:SystemDynamics} and given by~\cite{fornasierEquivariantSymmetriesInertial2025},
\begin{subequations}
\label{eq:ErrorDynamics}
\begin{align}
    \dot{e}_T &= (G - N)e_T - e_T(G - N) + T\wed{(\hat{b} - b)}\hat{T}^{-1}, \\
    \dot{e}_b &= \ve{\Ad}{\hat{T}}\,\ve{\ad}{\v{\Lambda_1(\hat{\xi}, u)}}\,(b - \hat{b}).
\end{align}
\end{subequations}
Note that the bias error dynamics are driven only by the bias error, in contrast to the alternative filters~\cite[Table~2]{fornasierEquivariantSymmetriesInertial2025} where the bias error dynamics are coupled to the navigation error.

\subsection{Error Coordinates and Linearization}
\label{sec:Linearization}

The error is linearized in the canonical logarithmic coordinates of the Tangent Group with the chart $\vartheta : \mathcal{U}_{\xi^{o}} \subset \M \rightarrow \R{18}$~\cite[Eq.~(32)]{fornasierEquivariantSymmetriesInertial2025},
\begin{align}
\label{eq:ErrorChart}
    \varepsilon \coloneq \vartheta(e) \coloneq \v{\log\big(\phi_{\xi^{o}}^{-1}(e)\big)},
\end{align}
where the projection $\phi_{\xi^{o}}(X) \coloneq \phi(X, \xi^{o})$ of~\cref{sec:GroupActions} is invertible since $\phi$ is free and transitive.

Linearizing~\eqref{eq:ErrorDynamics} about $\varepsilon = 0$ yields the first-order model $\dot{\varepsilon} \approx \mathbf{A}_{t}^{o}\varepsilon$, with the state matrix $\mathbf{A}_{t}^{o}$ derived from the chain of differentials given in~\cite[Lemma~V.2]{vangoorEquivariantFilterEqF2023} and evaluated at the origin input $u^{o} = \psi(\hat{X}^{-1}, u)$.
The resulting state matrix~\cite[Sec.~VI.A]{fornasierEquivariantFilterDesign2022a} is block-upper-triangular,
\begin{align}
\label{eq:StateMatrix}
    \mathbf{A}_{t}^{o} \coloneq \begin{bmatrix}
        \bm{\Upsilon} & I_{9} \\
        \mathbf{0}    & \ve{\ad}{(\mathrm{w}^{o} + \mathbf{g})}
    \end{bmatrix}, \quad
    \bm{\Upsilon} \coloneq \begin{bmatrix}
        \mathbf{0} & \mathbf{0} & \mathbf{0} \\
        \cro{g}    & \mathbf{0} & \mathbf{0} \\
        \mathbf{0} & I_{3}      & \mathbf{0}
    \end{bmatrix},
\end{align}
with $\mathrm{w}^{o}$ the origin IMU-input bundle. 
This is the matrix form of the TG-EqF error dynamics tabulated in~\cite[Table~2]{fornasierEquivariantSymmetriesInertial2025}. 
The constant coupling makes the navigation-state error exactly linear, leaving a second-order linearization error in the bias block alone~\cite{fornasierEquivariantSymmetriesInertial2025}.
Note that the chart $\vartheta$ reverses the sign of the bias block, $\varepsilon_{b} = -\v{e_{b}} + O(\|\varepsilon\|^{2})$, which accounts for the positive coupling, a $+I_9$ in the upper right block of~\eqref{eq:StateMatrix}.

The output matrices linearize the DVL and depth outputs of~\cref{sec:MeasurementModels} about $\varepsilon = 0$ and are derived from the chain of differentials in~\cite[Eq.~33]{vangoorEquivariantFilterEqF2023}.
The DVL output~\eqref{eq:DVLOutput}, with the compatible output action $\rho_{d}$, yields an equivariant output matrix~\cite[Lemma~V.3]{vangoorEquivariantFilterEqF2023}, which is third-order approximate,
\begin{align}
\label{eq:DVLOutputMatrix}
    \mathbf{C}_{d}^{\star} &\coloneq \begin{bmatrix}
        \tfrac{1}{2}\cro{(y_{d}^{o} + \tilde{y}_{d})} & I_{3} & \zero{3}{3} & \zero{3}{9}
    \end{bmatrix} \in \R{3\times18},
\end{align}
where $y_{d}^{o} = h_d(\xi^o) = 0$ at the chosen reference origin $\xi^o$.

The depth output~\eqref{eq:DepthOutput} yields a standard output matrix, which is second-order approximate,
\begin{align}
\label{eq:DepthOutputMatrix}
    \mathbf{C}_{p}^{o} &= \begin{bmatrix}
        (\hat{p} \times e_{3})^{\T} & \zero{1}{3} & e_{3}^{\T} & \zero{1}{9}
    \end{bmatrix} \in \R{1\times18}.
\end{align}

\subsection{Filter Equations}
\label{sec:FilterEquations}

The state matrix $\mathbf{A}_{t}^{o}$ is time varying, depending on the estimate only through the transported input $\mathrm{w}^{o}$. 
The input is assumed to be constant over a sampling interval $\Delta t$, which gives $\mathbf{A}_{t}^{o}$ on the interval $[t_{k-1}, t_{k}]$. 
The covariance flow is given by the ODE $\dot{\mathbf{\Sigma}} = \mathbf{A}_{t}^{o}\mathbf{\Sigma} + \mathbf{\Sigma}\,\mathbf{A}_{t}^{o\T} + \mathbf{P}$, whose solution is propagated by the transition matrix $\bm{\Phi} \coloneq \exp(\mathbf{A}_{t}^{o}\Delta t)$, evaluated numerically up to the fourth-order term.
The process-noise gain is given by $\mathbf{Q} = \mathbf{B}_{t}\,\mathbf{P}^{m}\,\mathbf{B}_{t}^{\T}$, with $\mathbf{Q}^{m}$ composed of the IMU white noise and random-walk noise densities, and is integrated to first order by the process noise matrix $\mathbf{Q}_{\mathrm{d}} \approx \mathbf{P}\,\Delta t$.
The input matrix $\mathbf{B}_{t}$ is obtained by linearizing the error dynamics with respect to a perturbation of the input~\cite[Eq.~42]{vangoorEquivariantFilterEqF2023},
\begin{align}
\label{eq:ProcessNoiseGainBlocks}
    \mathbf{B}_{t} &= \begin{bmatrix}
        \ve{\Ad}{\hat{T}}\big|_{:,1:6} & \zero{9}{6} \\
        \zero{9}{6} & -\ve{\Ad}{\hat{T}}\big|_{:,1:6}
    \end{bmatrix} \in \R{18 \times 12}.
\end{align}
The slice $|_{:,1:6}$ keeps only the $\omega, a$ columns of $\ve{\Ad}{\hat{T}}$.
The virtual velocity input $\nu$ is set to zero throughout.
The corresponding columns are dropped and the covariance is driven by the six physical IMU noise densities alone.

The filter is a continuous--discrete EqF, and the propagation and correction steps take the usual Kalman-filter form~\cite{fornasierEquivariantSymmetriesInertial2025}.
Over the interval $[t_{k-1}, t_k]$, the observer state propagates by a step of~\eqref{eq:ObserverDynamics}, $\hat{X}_{k} = \hat{X}_{k-1}\,\exp\!\big(\Lambda(\hat{\xi}, u)\,\Delta t\big)$, and the covariance by the transition matrix $\bm{\Phi}$ and process noise matrix $\mathbf{Q}_{\mathrm{d}}$.

At a measurement, the covariance $\mathbf{\Sigma}_{k}^{+}$ and Kalman gain $\mathbf{K}$ follow the discrete EKF equations in the innovation $\iota$, output matrix $\mathbf{C}$, and noise covariance $\mathbf{R}$, and the correction $\Delta = \wed{\big(\mathbf{K}\,\iota\big)}$ is applied on the group as $\hat{X}_{k}^{+} = \exp(\Delta)\,\hat{X}_{k}$ through the exponential~\eqref{eq:ExpMap}.
The DVL update uses the equivariant output matrix $\mathbf{C}_{d}^{\star}$~\eqref{eq:DVLOutputMatrix} and the measurement noise covariance matrix $\mathbf{R}_{d}$ derived by transporting the DVL noise covariance to the reference output origin $y_{d}^{o}$ via the output action $\rho_d$.
The depth update uses the standard output matrix $\mathbf{C}_{p}^{o}$~\eqref{eq:DepthOutputMatrix} and the measurement noise covariance matrix $\mathbf{R}_{p}$.

The update displaces the estimate on $\M$, so the covariance must be transported into the chart about the corrected estimate. The EqF reset step~\cite{geGeometryExtendedKalman2025} does this with the left Jacobian of~\cref{sec:LieGroups}, $\mathbf{J} = J_{l}(\v{\Delta})$, and the covariance update $\mathbf{\Sigma}_{k}^{+} \leftarrow \mathbf{J}\,\mathbf{\Sigma}_{k}^{+}\,\mathbf{J}^{\T}$, which smoothly approaches identity as $\Delta \to 0$.

\section{Simulations and Experimental Validation}
\label{sec:Experiments}

The TG-EqF is evaluated via Monte Carlo simulations against two filters: the TFG-IEKF with the Two-Frame-Group symmetry~\cite{barrauGeometryNavigationProblems2023, fornasierEquivariantSymmetriesInertial2025} and the MEKF with the Special Orthogonal Group symmetry~\cite{fornasierEquivariantSymmetriesInertial2025}.
All three filters receive identical, seed-matched measurements, initial estimates, initial covariances, IMU noise densities, and measurement noise covariances.
\Cref{sec:SimulationSetup} describes the simulation setup, and \cref{sec:ConsistencyResults,sec:AccuracyResults} present the consistency and accuracy results.
\Cref{sec:FieldEvaluation} evaluates the filters on AUV field data.

\subsection{Simulation Setup}
\label{sec:SimulationSetup}

We reconstruct a $300$~s trajectory from a real AUV survey dive (\cref{fig:SeekerAUV}), with a $120$~m path at roughly $26$~m depth that varies by approximately $5$~m (\cref{fig:DriftGrowth}).
Cubic $C^{2}$ splines are fitted to the offline factor-graph smoother solution, which also fuses position measurements from an Ultra-Short Baseline (USBL) sensor and heading measurements from a magnetometer.
These splines are differentiated to give the true trajectory, IMU inputs, and aiding measurements for simulations.
The vehicle averages $0.40$~m/s along a lawnmower pattern of three straight segments and three $\approx 170^{\circ}$ turns, at yaw rates up to $0.29$~rad/s.
These turns excite the attitude-velocity coupling, the main term that the TG and TFG symmetries treat differently (\cref{tab:LinearizationOrder}).

IMU data are generated at $100$~Hz using noise densities measured directly from the vehicle's instrument, $\sigma_{\omega} = (4.5, 3.2, 1.7) \times 10^{-4}$~rad/s/$\sqrt{\text{Hz}}$ and $\sigma_{a} = (4.5, 6.3, 8.9) \times 10^{-3}$~m/s$^{2}$/$\sqrt{\text{Hz}}$.
The true biases are zero and constant, with no bias random walk.
Aiding measurements carry zero-mean Gaussian noise: DVL velocity at $3$~Hz with isotropic standard deviation $\sigma_{\mathrm{DVL}} = 0.1$~m/s, and depth at $10$~Hz with $\sigma_{\mathrm{depth}} = 0.1$~m.
Note that horizontal position $p_x, p_y$ and heading are \emph{unobservable} with this sensor suite~\cite{kleinObservabilityAnalysisDVL2015}.

Consistency is evaluated with the Average Normalized Estimation Error Squared (ANEES), the average value of the filter's error ${\varepsilon^{\T}}{\mathbf{\Sigma}}^{-1}\varepsilon$ normalized over the number of seeds $N=100$ and the dimension of the error block $\dim(\varepsilon)$~\cite[Section~5.4.2]{bar-shalomEstimationApplicationsTracking2001}.
It equals one when a filter's reported covariance matches its actual error distribution, and exceeds one when the filter is overconfident.
Over $100$ seeds, the $95\%$ $\chi^{2}$ interval of the ANEES is $[0.85, 1.17]$ for a three-dimensional block and $[0.91, 1.09]$ for the nine-dimensional navigation block $(R, v, p)$.
The initial estimate is offset from the truth by a zero-mean Gaussian draw with standard deviations $\sigma_{R} = 5^{\circ}$, $\sigma_{v} = 10^{-2}$~m/s, $\sigma_{p} = 10^{-2}$~m, $\sigma_{b_\omega} = 10^{-2}$~rad/s, and $\sigma_{b_a} = 10^{-1}$~m/s$^{2}$.
The same draw is applied to all three filters, as a world-frame rotation for attitude and an additive offset for the other states, and each filter builds its initial covariance from the same standard deviations.
The TG-EqF's virtual bias $b_\nu$, which has no physical counterpart, is initialized with $\sigma_{b_\nu} = 10^{-6}$~m/s.

Accuracy is evaluated with the RMSE over time of the physical errors, averaged over the $N$ seeds, using the geodesic angle for attitude and the Euclidean norm for the other states.
Each estimated trajectory is first aligned to the reference by an $\SE{2}$ transform at $t = 0$, which removes the unobservable offsets in initial horizontal position and heading.
Additionally, we report how many seeds each filter wins by attaining the lowest error of the three, and each filter's coverage: the number of seeds whose ANEES falls inside the $95\%$ $\chi^{2}$ interval of that filter's own covariance, nominally $95$ seeds of $100$.

\begin{figure}[t]
    \centering
    \includegraphics[width=\linewidth]{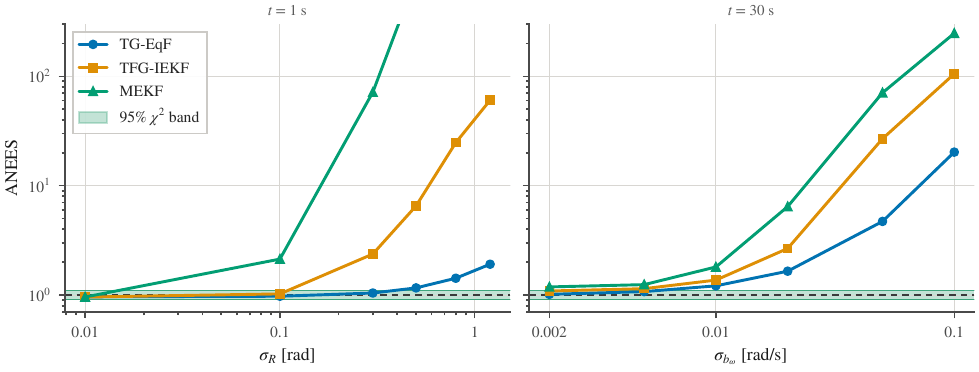}
    \caption{ANEES of the navigation block $(R, v, p)$ as the initial attitude uncertainty (left, read at $t = 1$~s) and the initial gyroscope-bias uncertainty (right, read at $t = 30$~s) are swept, with $N = 100$ seeds per point. Each panel is read before the unobservable heading degrades the filters. The TG-EqF stays closest to one in both sweeps, while the baselines diverge as their second-order linearization errors grow (\cref{tab:LinearizationOrder}).}
    \label{fig:InitSweep}
\end{figure}

\begin{figure}[t]
    \centering
    \includegraphics[width=\linewidth]{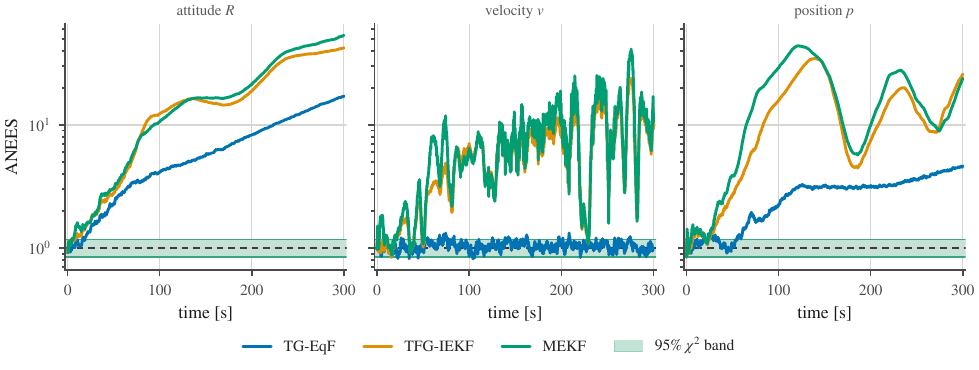}
    \caption{ANEES of the attitude, velocity, and position blocks over the $300$~s run, $N = 100$ seeds. A consistent filter sits at one, inside the shaded $95\%$ $\chi^{2}$ interval. The TG-EqF's velocity ANEES stays near one for the whole run, while that of the baselines climbs to roughly ten. In attitude and position all three filters leave the interval: at $t = 300$~s the TG-EqF reaches $17$ and $4.7$, and the baselines more than $40$ and about $25$.}
    \label{fig:Consistency}
\end{figure}

\begin{figure}[t]
    \centering
    \includegraphics[width=\linewidth]{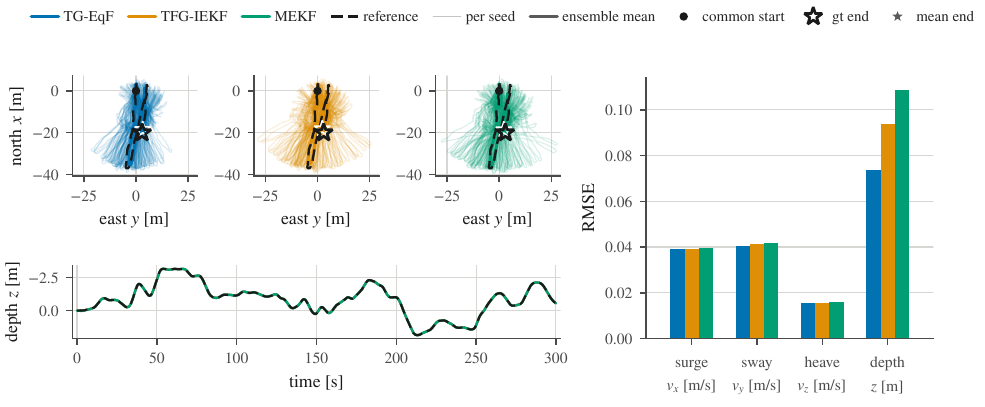}
    \caption{Spatial drift and RMSE over the $300$~s run ($N = 100$). Top left: estimated trajectories of each seed (light) against the reference (dashed). The TG-EqF's spread is visibly tighter, matching its lower position RMSE in \cref{tab:Accuracy}. Bottom left: estimated depth against the reference. Right: RMSE of the directly measured states. The velocity RMSE is nearly identical across filters, while the MEKF's depth RMSE is $48\%$ higher than the TG-EqF's.}
    \label{fig:DriftGrowth}
\end{figure}

\begin{table}[tb]
    \centering
    \small
    \caption{Navigation accuracy over the $300$~s run ($N = 100$). Left: RMSE (bold denotes the minimum if it improves upon the next best by $\ge 2\%$). Right: the number of seeds on which each filter attains the lowest error of the three.}
    \label{tab:Accuracy}
    \begin{tabular}{lrrrrrr}
    \toprule
     & \multicolumn{3}{c}{RMSE} & \multicolumn{3}{c}{Seeds won} \\
    \cmidrule(lr){2-4} \cmidrule(lr){5-7}
     & TG-EqF & TFG-IEKF & MEKF & TG-EqF & TFG-IEKF & MEKF \\
    \midrule
    $R$ [rad] & \textbf{0.321} & 0.416 & 0.429 & \textbf{56} & 21 & 23 \\
    $v$ [m/s] & \textbf{0.142} & 0.174 & 0.177 & \textbf{56} & 20 & 24 \\
    $p$ [m] & \textbf{4.02} & 5.18 & 5.24 & \textbf{51} & 23 & 26 \\
    $b_\omega$ [mrad/s] & \textbf{4.04} & 4.59 & 4.74 & \textbf{58} & 21 & 21 \\
    $b_a$ [mm/s$^2$] & 28.7 & 28.8 & 30.5 & \textbf{46} & 29 & 25 \\
    \bottomrule
\end{tabular}

\end{table}

\subsection{Consistency}
\label{sec:ConsistencyResults}

The three filters differ most in the covariance they report, and the difference follows from the order of the linearization error (\cref{tab:LinearizationOrder}).
\Cref{fig:InitSweep} reports the ANEES of the navigation block against the initial attitude and gyroscope-bias uncertainties, one at a time, while holding the other state uncertainties constant.
The left panel sweeps the initial attitude uncertainty over two decades, from $\sigma_{R} = 10^{-2}$ to $1.2$~rad, and reads the ANEES at $t = 1$~s.
At $\sigma_{R} = 10^{-2}$~rad all three filters are consistent, since the higher-order terms in the error dynamics are negligible.
As $\sigma_{R}$ grows, the MEKF leaves the $95\%$ interval first and reaches an ANEES of $6.1 \times 10^{4}$ at $\sigma_{R} = 1.2$~rad, while the TFG-IEKF reaches $60$.
The TG-EqF stays inside the interval up to $\sigma_{R} = 0.3$~rad and reaches only $1.9$ at $\sigma_{R} = 1.2$~rad.
The right panel sweeps the initial gyroscope-bias uncertainty from $2 \times 10^{-3}$ to $10^{-1}$~rad/s and reads the ANEES at $t = 30$~s, once the bias error has acted through the first turn.
This range includes the gyroscope bias measured on the IMU of the real AUV, $6.1 \times 10^{-3}$~rad/s, near which the ANEES of all three filters remains close to one, although only the TG-EqF lies inside the $95\%$ interval.
At the maximum of $10^{-1}$~rad/s no filter is consistent, but the TG-EqF's ANEES of $20$ is still five times below the TFG-IEKF's $105$.
The TG-EqF's consistency therefore degrades far less than the baselines' as either initial error grows.
Both results follow from its exact navigation error dynamics (\cref{tab:LinearizationOrder}), whereas both baselines linearize the velocity error dynamics only to second order.

\Cref{fig:Consistency} plots the ANEES against time for the uncertainties described in \cref{sec:SimulationSetup}.
The TG-EqF's velocity ANEES stays within $[0.82, 1.33]$ and lies inside the $95\%$ $\chi^{2}$ interval for $91\%$ of the run, against $9\%$ for the TFG-IEKF and $6\%$ for the MEKF.
The performance in terms of coverage has the same ordering: at $t = 300$~s the TG-EqF's velocity covariance covers all $100$ seeds, against $39$ for the TFG-IEKF and $37$ for the MEKF.
The TG-EqF thus keeps its velocity covariance consistent over five minutes of dead reckoning.

Attitude and position consistency degrade in all three filters, because no measurement corrects the heading error and that error feeds into position error.
At $t = 300$~s the attitude ANEES is $17.1$ for the TG-EqF, against $42.4$ for the TFG-IEKF and $53.5$ for the MEKF, and the position ANEES is $4.7$, against $25.9$ and $23.9$.
On the navigation block $(R, v, p)$ the three filters end at $7.4$, $17.5$, and $36.0$, with coverage of $27$, $13$, and $17$ seeds against a nominal $95$.
The baselines' position ANEES also swings with the trajectory, peaking near the far end of a turn, at $t \approx 120$--$140$ and $233$~s.
The TG-EqF's position ANEES instead stays between $2.3$ and $3.7$ from $100$ to $250$~s.
The turns couple attitude error into velocity error, whose dynamics both baselines linearize only to second order (\cref{tab:LinearizationOrder}), and the velocity error then integrates into position; the TG-EqF's exact velocity error dynamics avoid this.

\subsection{Accuracy}
\label{sec:AccuracyResults}

The TG-EqF is the most accurate filter over the full run, with an RMSE $18$--$25\%$ lower than both baselines in the navigation states (\cref{tab:Accuracy}).
The seed-by-seed counts support this: the TG-EqF has the lowest navigation error of the three on $51$ to $56$ of the $100$ seeds depending on the state, against $20$ to $26$ for the other filters.
\Cref{fig:DriftGrowth} shows the same advantage: the TG-EqF's horizontal ensemble spread is visibly tighter, and its position RMSE is $22$--$23\%$ lower than the baselines'.
The body-frame velocity RMSE, which the DVL measures directly, is nearly identical across filters, yet the MEKF's depth RMSE is $48\%$ higher than the TG-EqF's.
Since the MEKF's depth update is exact (\cref{tab:LinearizationOrder}), this margin does not come from the update itself.
It comes from how each filter weights the depth measurement against its own estimate, which is set by the covariance and is in line with the TG-EqF's more consistent estimate of covariance.
In the biases, the TG-EqF's gyroscope-bias RMSE is $12\%$ lower than the TFG-IEKF's, with the separation opening after $5$~s (\cref{fig:BiasError}).

\begin{figure}[tb]
    \centering
    \includegraphics[width=\linewidth]{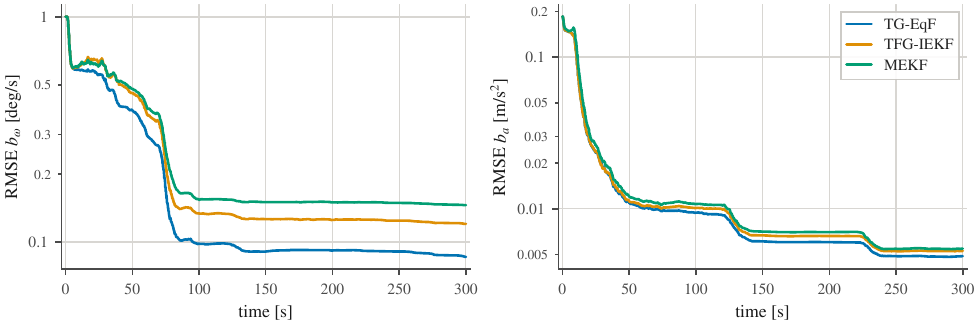}
    \caption{RMS bias estimation error over the $300$~s run, $N = 100$ seeds, with all three filters initialized identically. Left: gyroscope bias, the TG-EqF pulls away from the baselines from $t = 5$~s and keeps the gap to the end of the run; at $t = 300$~s its error is $1.4$ and $1.7$ times lower than theirs. Right: accelerometer bias, where the three filters stay within $12\%$ of one another at $t = 300$~s.}
    \label{fig:BiasError}
\end{figure}

This margin builds up over long horizons.
Over the first $30$~s the TG-EqF's position RMSE is only $6\%$ ($<3$~cm) lower than the TFG-IEKF's; the gap grows to $15\%$ over $60$~s and settles at $22\%$ for windows of two minutes or longer.
The exact linearization of the navigation error dynamics for the TG-EqF does not eliminate error growth, as unobservable heading error inevitably couples into position drift, but it significantly reduces the rate of that growth, making the TG-EqF relatively more reliable for long-horizon dead reckoning.
The relative performance is the same when a filter is poorly initialized in bias rather than in attitude (\cref{fig:InitSweep}).
At $\sigma_{b_\omega} = 5 \times 10^{-2}$~rad/s the TG-EqF's position RMSE is $0.90$~m, against $1.21$~m for the TFG-IEKF and $1.18$~m for the MEKF.

\subsection{Field Data Evaluation}
\label{sec:FieldEvaluation}

The practical viability of the TG-EqF is assessed on field data collected by an AUV surveying a complex reef environment (\cref{fig:SeekerAUV}).
Real-world datasets present unmodeled acoustic dropouts, sensor misalignment, and highly dynamic platform motion, offering a demanding test of the filter's theoretical properties.
The filters are evaluated over four distinct $300$~s survey windows (\cref{fig:FieldTrajectories}), where they are provided with $100$~Hz IMU data, sharing the instrument and noise densities of \cref{sec:SimulationSetup}, and aided by $3$~Hz DVL body velocity ($\sigma_{\mathrm{DVL}} = 0.033$~m/s) and $10$~Hz pressure depth ($\sigma_{\mathrm{depth}} = 0.1$~m). 
The same smoother (\cref{sec:SimulationSetup}) provides the reference.
To isolate the filter dynamics from initialization error, all three filters start from the reference pose and velocity, with the IMU biases set to their mission means estimated against this reference.

The relative performance of the different filters on the field data reflects the results from the Monte Carlo simulations.
The TG-EqF attains the lowest final position error in every window, outperforming the next-best filter by a median factor of $4.0$ with per-window ratios $1.6$--$4.3$.
The median final position errors of the TG-EqF, TFG-IEKF, and MEKF are $2.9$~m, $15.1$~m, and $9.9$~m respectively, while their median final attitude errors are $0.14$~rad, $2.16$~rad, and $1.16$~rad.
These results show that the TG-EqF exhibits reduced position drift over prolonged real-world dead reckoning compared to the alternative filters.

\begin{figure}[t]
    \centering
    \includegraphics[width=\linewidth]{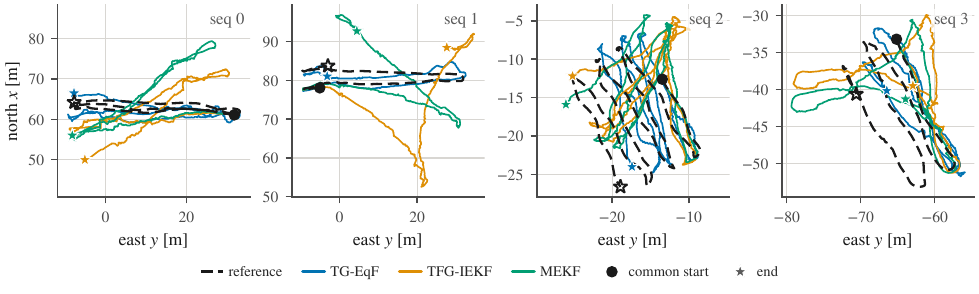}
    \caption{Field evaluation over four $300$~s survey windows.
    The filters use IMU data aided by DVL body velocity and pressure depth, but receive no absolute position information.
    Against the reference (dashed), the TG-EqF demonstrates greater stability and significantly reduced position drift over long horizons in real-world conditions.}
    \label{fig:FieldTrajectories}
\end{figure}

\section{Conclusion}
\label{sec:Conclusion}

This work presents the TG-EqF, an EqF for acoustic- and depth-aided inertial navigation that carries the IMU biases in its symmetry and uses an equivariant DVL output.
In Monte Carlo simulations against a TFG-IEKF and an MEKF, its consistency degrades least as initial attitude and gyroscope-bias errors grow, and it is the only filter whose velocity covariance stays consistent over the whole run.
Attitude and position consistency degrade in all three filters, least for the TG-EqF, which is also the most accurate in attitude, velocity, and position, with the advantage emerging within two minutes.
On field data from an AUV reef survey, the TG-EqF attains the lowest final position error in all four windows, by a median factor of $4.0$ over the next-best filter.
The formulation of the TG-EqF for AUV navigation and our implementation within GTSAM will facilitate extensions to incorporate USBL position and magnetometer heading to make the system fully observable for mission-time testing, explore error formulations~\cite{fornasierEquivariantSymmetriesInertial2025} for higher-order depth linearization, and run extensive field campaigns with repeated survey passes to assess real-world covariance consistency and accuracy, leading to deployment on onboard edge compute.

{\footnotesize
\bibliographystyle{IEEEtran}
\bibliography{references}
}

\end{document}